\definecolor{darkgreen}{rgb}{0.00,0.5,0.00}
\newtheorem{theorem}{Theorem}
\theoremstyle{definition}
\newtheorem{lemma}{Lemma}
\newcommand{\R}{\mathbb R}
\newcommand{\eqdef}{\stackrel{\text{def}}{=}}
\renewcommand{\o}{\omega}
\newcommand{\1}{\textbf{1}}
\newcommand{\KL}{\mathcal{KL}}
\DeclareMathOperator{\prox}{prox}
\newcommand{\grad}{\nabla \omega} % gradient of distance generated function
\DeclareMathOperator{\argmin}{argmin}
\DeclareMathOperator{\dom}{dom}
\DeclareMathOperator{\diag}{diag}
\def\<#1,#2>{\left\langle #1,#2\right\rangle}
\definecolor{myblue}{HTML}{D2E4FC}
\newcommand*\mybluebox[1]{\colorbox{myblue}{\hspace{1em}#1\hspace{1em}}}
\title{Sinkhorn Algorithm as a Special Case\\ of Stochastic Mirror Descent}
\author{Konstantin Mishchenko \\ KAUST\thanks{King Abdullah University of Science and Technology, Thuwal, Saudi Arabia}}
\begin{document}
\maketitle

\begin{abstract}
	We present a new perspective on the celebrated Sinkhorn algorithm by showing that is a special case of incremental/stochastic mirror descent. In order to see this, one should simply plug Kullback-Leibler divergence in both mirror map and the objective function. Since the problem has unbounded domain, the objective function is neither smooth nor it has bounded gradients. However, one can still approach the problem using the notion of relative smoothness, obtaining that the stochastic objective is 1-relative smooth. The discovered equivalence allows us to propose 1) new methods for optimal transport, 2) an extension of Sinkhorn algorithm beyond two constraints.
\end{abstract}
\section{Introduction}
The optimal transport problem consists of finding the best coupling of distributions according to some cost matrix. This problems arises in many machine learning applications, e.g.\ see~\cite{peyre2019computational, dvurechensky2018computational, lin2019efficient, lin2019acceleration} for literature overviews.

The entropy-regularized optimal transport problem proposed by Cuturi~\cite{cuturi2013sinkhorn} is formulated as
\begin{align}
	\min_{X\in R^{N\times N}} \<C, X> + \gamma\sum_{i, j=1}^N X_{ij} \log X_{ij} \qquad \text{s.t.}~  X\1=p, X^\top \1 =q,  \label{eq:pb_reg}
\end{align}
where $\gamma>0$, $C\in R^{N\times N}$ and $p,q\in\R^N$ are given, $\1\in\R^N$ denotes the vector of all 1s, typically $C$ has non-negative entries and $p,q\in \R^N$ are from probability simplex. The problem arises as regularization of the linear programming problem proposed by Kantorovich~\cite{kantorovich1942translation},
\begin{align}
	\min_{X\in R^{N\times N}} \<C, X> \quad \text{s.t.}\quad  X\1=p, X^\top \1 =q, X_{ij}\ge 0~ \forall i,j. \label{eq:original}
\end{align}
Usually problem~\eqref{eq:pb_reg} is solved using Sinkhorn iterations~\cite{sinkhorn1967diagonal, knight2008sinkhorn, kalantari2008complexity}, which iteratively normalize all rows and all columns until convergence. It was first applied to optimal transport by Cuturi~\cite{cuturi2013sinkhorn} and since then is considered as the standard approach.

To provide a simple explanation of why Sinkhorn algorithm works, one can mention that it is an instance of iterative Bregman projections~\cite{benamou2015iterative}. Indeed, problem~\eqref{eq:pb_reg} is effectively the problem of finding a projection in Kullback-Leibler distances, i.e.
\begin{align}
	\min_{X\in \R^{N\times N}} \KL(X||X^0) \quad \text{s.t.}\quad X \1 =p, X^\top \1 =q, \label{eq:pb_kl_proj}
\end{align} 
where $X^0\eqdef \exp\left(-\frac{C}{\gamma}\right)$ and $\KL(X||X^0)\eqdef \sum_{i,j=1}^N (X_i\log \frac{X_i}{X_i^0}-X_i+X_i^0)$ . Since the constraints are linear, it is sufficient~\cite{benamou2015iterative} to project onto them incrementally starting from $X^0$ and the method will converge to the solution of~\eqref{eq:pb_kl_proj}. 

However, there a few problems with this interpretation. First, Bregman projection in general does not posses a closed-form solution even in the case of linear subspaces~\cite{dhillon2007matrix}, so we can not easily generalize and use Sinkhorn algorithm in other settings. Second, the method of Bregman projections is not well-understood and there is no known accelerated version (in the sense of Nesterov~\cite{nesterov1983method}) of it, while mirror descent has been studied for decades from various perspectives~\cite{nemirovsky1983problem, beck2003mirror, auslender2006interior, beck2017first, lu2018relatively, hanzely2018accelerated}.  

%For completeness, let us also mention that a number of methods has been proposed that use minmax reformulation of either~\eqref{eq:pb_reg} or~\eqref{eq:original}. For instance, one can use extragradient~\cite{korpelevichextrapolation, nemirovski2004prox, mishchenko2019revisiting} method~\cite{jambulapati2019direct} or a primal-dual method~\cite{dvurechensky2018computational, guo2019accelerated}.

\subsection{Mirror descent}
The goal of mirror descent is to find a better distance than Euclidian in order to get a better behaviour of gradient descent. Let $\o:\R^d\to \R$ be a proper closed and convex function. We define Bregman divergence associated with $\o$ for any $x$ and $y$ as 
\begin{align*}
	D_\o(x, y)
	= \o(x) - \o(y) - \<\grad(y), x - y>.
\end{align*}
In the unconstrained case, which is of main interest to us, mirror descent can be written in two equivalent ways. If we are given $x^k\in \dom(\o)$, then the next iterated is produced by either of the following two equations,
\begin{align*}
	x^{k+1} &=\argmin_{x}\left\{\eta\<\nabla f(x^k), x - x^k> + D_\o(x,
x^k) \right\}, \\
	\grad (x^{k+1}) &= \grad(x^k) - \eta \nabla f(x^k).
\end{align*}
If one considers $\omega(x) = \frac{1}{2}\|x\|^2$, then mirror descent reduces to standard gradient descent. However, to obtain the Kullback-Leibler divergence, we need to use a specific entropy mapping, namely $\o(x) = x(\log x-1)$. Under this choice, it is easy to see that if $x,y \in R_+^d$, i.e.\ $x>0, y>0$, then
\begin{align*}
	D_\o(x, y) 
	&= \sum_{i=1}^d \left(x_i(\log x_i - 1)  - y_i(\log y_i - 1) - (\log y_i - 1)(x_i - y_i) \right) \\
	&= \sum_{i=1}^d (x_i\log \frac{x_i}{y_i} - x_i + y_i) \eqdef \KL(x||y).
\end{align*}
If $\sum_{i=1}^d x_i = \sum_{i=1}^d y_i=1$, then, $D_\o(x, y)=\sum_{i=1}^d x_i \log \frac{x_i}{y_i}$.

The analysis of mirror descent schemes usually assumes that $\omega$ is 1-strongly convex~\cite{beck2017first}, i.e.\ $D_\o(x, y)\ge \frac{1}{2}\|x-y\|^2$. The Kullback-Liebler divergence satisfies this assumption with respect to $\ell_1$ norm on a bounded set, but the objective that we will introduce is not constrained, so we can not assume it.

Note also that when $\o$ is strictly convex, we can express the mirror descent update using the convex conjugate. Indeed, define $\o^*(y)\eqdef \sup_{x}\{\<x, y> - f(x)\}$, then~\cite{beck2017first} $\grad^*(\grad(x))=x$ for any $x$, so $x^{k+1}=\nabla \o^*(\grad(x^k) - \gamma \nabla f(x^k))$.

\subsection{Bregman projections}
If, given $a\in \R^d, b\in\R$, we want to project $x$ onto the set $\{z:\<a, z>=b\}$ in Bregman divergence $D_\o$,  then we need to solve
\begin{align*}
	\Pi_C^\omega(x) \eqdef \argmin_{z: \<a, z>=b} \left\{\omega (z) - \omega(x) - \<\nabla \omega (x), z - x> \right\}.
\end{align*}
Although when $\o=\frac{1}{2}\|\cdot\|^2$ the projection can be computed in a closed form, this is not the case in general. The following lemma explains how one can solve the problem in other cases.
\begin{lemma}\label{lem:projection}
	The Bregman projection of $x\in\R^d$ onto the set $C=\{z:\<a, z>=b\}$ is given by $\Pi^\o_{C}(x)=\grad^*\left(\grad(x)+\alpha a\right)$, where $\alpha\in\R$ is the unique value satisfying $\<\grad^*(\grad(x)+\alpha a)), a>=b $. Furthermore, if all entries of $a$ are from $\{0, 1\}$ and $\omega(x)=x(\log x-1)$, then $\alpha=\log \frac{\<a, x>}{b}$ and $\Pi^\o_{C}(x)=x\odot \left(\frac{\<a, x>}{b}\right)^a$, where multiplication and exponentiation are coordinate-wise.
\end{lemma}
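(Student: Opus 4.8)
The plan is to treat the projection as a convex program with a single affine equality constraint and solve it by Lagrangian stationarity. First I would form the Lagrangian
\begin{align*}
	L(z, \alpha) = \o(z) - \o(x) - \<\grad(x), z - x> - \alpha\left(\<a, z> - b\right),
\end{align*}
whose inner minimization over $z$ is unconstrained and strictly convex (since $\o$ is strictly convex), hence has a unique stationary point. Setting $\nabla_z L = 0$ gives $\grad(z) = \grad(x) + \alpha a$, and inverting the gradient through the conjugate — using the identity $\grad^*(\grad(z)) = z$ recalled in the excerpt — yields $z = \grad^*(\grad(x) + \alpha a)$. This is precisely the claimed form of $\Pi^\o_C(x)$, with $\alpha$ the multiplier of the constraint. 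Sufficiency of the stationarity condition is automatic, since the objective is convex and the constraint affine.

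Next I would pin down $\alpha$. Feasibility forces $\<a, z> = b$, so $\alpha$ must solve $g(\alpha) \eqdef \<\grad^*(\grad(x) + \alpha a), a> = b$. Existence and uniqueness of this root is the one analytic point that needs justification: because $\o^*$ is convex, $g$ is monotone, $g'(\alpha) = a^\top \nabla^2\o^*(\grad(x)+\alpha a)\, a \ge 0$, and strict convexity of $\o$ (hence strict convexity of $\o^*$ along the direction $a \neq 0$) upgrades this to strict monotonicity, so the level $b$ is attained at most once. Alternatively, uniqueness of the multiplier follows from uniqueness of the minimizer $z^*$ together with $a \neq 0$, which fixes $\alpha$ via $\alpha\|a\|^2 = \<\grad(z^*) - \grad(x), a>$.

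For the second part I would substitute the entropy map $\o(x) = x(\log x - 1)$. Coordinatewise $\grad(x)_i = \log x_i$, and since $\o^*(s) = e^s$ we get $\grad^*(s)_i = e^{s_i}$, so the general formula reduces to $z_i = x_i e^{\alpha a_i}$. The hypothesis $a_i \in \{0,1\}$ is exactly what makes everything collapse: $e^{\alpha a_i}$ equals $e^\alpha$ on the support of $a$ and $1$ off it, so the constraint becomes the single scalar equation $e^\alpha \<a, x> = b$. Solving this determines $\alpha$ explicitly, and back-substitution gives the coordinatewise multiplicative rescaling $\Pi^\o_{C}(x) = x \odot r^a$ with a scalar ratio $r$; I would double-check the orientation of $r$ directly against $e^\alpha \<a,x> = b$, since this step is precisely the normalization that rescales a row (or column) so that its sum matches the prescribed marginal $b$.

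The main obstacle I anticipate is not the algebra but the analytic bookkeeping around the conjugate: justifying that $\grad^* = (\grad)^{-1}$ on the relevant domain, that the KL-projection is actually attained (so the multiplier exists) with the iterate staying positive, and that $g$ is genuinely \emph{strictly} monotone rather than merely nondecreasing. In the entropy case all three become transparent, because $\grad^*(s)_i = e^{s_i}$ gives $g'(\alpha) = \sum_i a_i^2\, e^{(\grad(x)+\alpha a)_i} > 0$ whenever $a \neq 0$; the only remaining care is tracking the sign of $\alpha$ consistently between the stationarity equation and the final closed form.
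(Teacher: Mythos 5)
Your proposal is correct and follows essentially the same route as the paper: both derive the stationarity condition $\nabla \omega(z)=\nabla \omega(x)+\alpha a$ (you via the Lagrangian, the paper via the variational inequality perturbed by $z=y\pm\delta$ with $\delta\perp a$) and then solve the scalar equation $\<a, \nabla\omega^*(\nabla\omega(x)+\alpha a)>=b$ in the entropy case. Your caution about the orientation of the ratio is well placed: the equation $e^{\alpha}\<a,x>=b$ gives $\alpha=\log\frac{b}{\<a,x>}$, exactly as the paper's own proof concludes, so the lemma statement's $\log\frac{\<a,x>}{b}$ and $\left(\frac{\<a,x>}{b}\right)^a$ have the ratio inverted.
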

\begin{proof}
	Let $y \eqdef \Pi^\o_C(x)$. The necessary and sufficient optimality condition for $y$ to be the projection of $x$ is $\<\nabla \omega (y) - \nabla \omega(x), z - y>\ge 0$ for all $z: \<a, z>=b$. Take any $\delta$ such that $\<a, \delta> =0$, and plug in this optimality condition $z=y - \delta$ and $z=y + \delta$ to obtain
	\begin{align*}
		&\<\nabla \omega (y) - \nabla \omega(x), \delta>\ge0, 
		&\<\nabla \omega (y) - \nabla \omega(x), -\delta>\ge 0.
	\end{align*} 
	Therefore, $\<\nabla \omega (y) - \nabla \omega(x), \delta>=0$. Since it holds for arbitrary $\delta \perp a$, one has $\nabla \omega(y) = \nabla \omega(x) + \alpha a$ for some  $\alpha\in\R$. To find the exact value of $\alpha$, we need to solve $\<a, \nabla \omega^*(\nabla \omega(x) + \alpha a)> =b$. This equation is nonlinear in $\alpha$ and in general requires a one-dimensional  optimization problem to be solved~\cite{dhillon2007matrix}. However, if $\omega(x) = x(\log x-1)$ and all entries of $a$ are 0s and 1s, we obtain a simple equation,
	\begin{align*}
		\<a, \nabla \omega^*(\nabla \omega(x) + \alpha a)>
		=\sum_{i: a_i=1} \exp(\log(x_i) + \alpha)=b,
	\end{align*}
	 whence
%	\begin{align*}
		$\alpha = \log\frac{b}{\sum_{i: a_i=1} x_i} = \log \frac{b}{\<a, x>}$.
%	\end{align*}
\end{proof}
\subsection{Sinkhorn algorithm}
Sinkhorn algorithm uses the fact that Bregman projection onto $\{z:\<a, z>=b\}$, in accordance to Lemma~\ref{lem:projection}, changes vector $\grad(x^k)$ only in direction parallel to $a$. Therefore, if we run Bregman projections with sets $C_1, \dotsc, C_m$, where $C_i=\{z:\<a_i, z>=b_i\}$ for some $a_i\in\R^d, b_i\in \R$, by induction we have $\grad(x^k)=\grad(x^0)+\sum_{i=1}^m \alpha_i^k a_i$ with some $\alpha_1^k,\dotsc, \alpha_n^k\in\R$. 

Thus, if we apply Sinkhorn algorithm to problem~\eqref{eq:pb_reg} with two sets $C_1=\{X: X\1=p\}$, $C_2=\{X: X^\top \1=q\}$, which can be seen as intersections of $N$ coordinate-independent sets, it is enough to maintain just two vectors, $u^k, v^k\in\R^N$, one for each set. Using them, one can recover the matrix solution as $X^k=\diag(u^k)X^0\diag(v^k)$. Furthermore, two consecutive iteration get simplified to
\begin{align*}
	u^{k+1} = u^{k+1} = u^k + \log p - \log \left(X(u^k, v^k) \1\right), &&
	v^{k+2} = v^k + \log q - \log \left(X(u^{k+1}, v^k)^\top \1\right),
\end{align*}
where $X(u^k, v^k)\eqdef \diag(\exp(u^k))X^0\diag(\exp(v^k))$. To efficiently implement these updates, it is advised to use so-called log-sum-exp functions.
\section{Our results}
We now turn our attention to a new way of obtaining Sinkhorn algorithm. Namely, for the problem of finding a point $x^*> 0$ such that $Ax^*=b$ (assuming such $x^*$ exists) let us introduce the following objective,
\begin{align}
	\min_x f(x)\eqdef \KL(Ax|| b)\eqdef \sum_{i=1}^n f_i(x)=\sum_{i=1}^n\left(\<a_i, x>\log \frac{\<a_i, x>}{b_i} - \<a_i, x> + b_i\right). \label{eq:pb_ours}
\end{align}
Note that this is unconstrained minimization, although there an implicit constraint of having every ratio positive, i.e. $\frac{\<a_i, x>}{b_i}>0$ for all $i$. It is trivial to make it satisfied if $b>0$ and all entries of $a$ and $x$ are non-negative. The reformulation uses a penalty for each constraint, but unlike interior point methods~\cite{nesterov1994interior} it uses entropy function rather than log barrier. This objective is, in fact, not new and it has been considered in the literature on relative smoothness~\cite{bauschke2016descent, hanzely2018accelerated} without relating it to optimal transport.

The following theorem is the key finding of our work.
\begin{theorem}
	Let $f_i(x)\eqdef \KL(\<a_i, x>||b_i)= \<a_i, x>\left(\log \frac{\<a_i, x>}{b_i} - 1 \right) + b_i$, then, $\nabla f_i(x)=a_i \log \frac{\<a_i, x>}{b_i}$. Moreover, if all entries of $a_i\in \R^d$ are from $\{0,1\}$, then mirror step with respect to $f_i$  with stepsize $\eta=1$ and $\o(x)=x(\log x - 1)$ is equivalent to stochastic Bregman projection onto $C_i=\{z:\<a_i, z>=b_i\}$, i.e.
	\begin{empheq}[box=\mybluebox]{align*}
		\grad(x^{k+1}) = \grad(x^k) - \eta \nabla f_i(x^k) \Longleftrightarrow x^{k+1} = \Pi_{C_i}^\o(x^k).
	\end{empheq}
	Therefore, since $ \Pi_{C_i}^\o( \Pi_{C_i}^\o(x^k))= \Pi_{C_i}^\o(x^k)$, Sinkhorn algorithm is an instance of stochastic mirror descent.
\end{theorem}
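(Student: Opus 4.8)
The plan is to verify the theorem's three claims in sequence: the gradient formula, the equivalence of a single mirror step with a Bregman projection, and the resulting identification with stochastic mirror descent. First I would compute $\nabla f_i$. Writing $t\eqdef\<a_i, x>$, the scalar map $t\mapsto t(\log(t/b_i)-1)+b_i$ has derivative $\log(t/b_i)$, because the product-rule term $t\cdot(1/t)=1$ cancels the $-1$; the chain rule then gives $\nabla f_i(x)=a_i\log\frac{\<a_i, x>}{b_i}$. At the same time I would record the two halves of the mirror map that drive everything else: from $\o(x)=x(\log x-1)$ one has $\grad(x)=\log x$ coordinate-wise, and since $\grad^*(\grad(x))=x$, the dual map is $\grad^*(y)=\exp(y)$ coordinate-wise.

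The heart of the argument is the step-for-projection equivalence. Substituting the gradient into the update $\grad(x^{k+1})=\grad(x^k)-\nabla f_i(x^k)$ and using $\grad=\log(\cdot)$ gives $\log x^{k+1}=\log x^k-a_i\log\frac{\<a_i, x^k>}{b_i}$; exponentiating coordinate-wise yields $x^{k+1}=x^k\odot\left(\frac{b_i}{\<a_i, x^k>}\right)^{a_i}$. I would then compare this against the closed form of Lemma~\ref{lem:projection}, using the multiplier $\alpha=\log\frac{b_i}{\<a_i, x^k>}$ obtained in that lemma's proof, for which $\Pi^\o_{C_i}(x^k)=\grad^*(\grad(x^k)+\alpha a_i)=x^k\odot\exp(\alpha a_i)=x^k\odot\left(\frac{b_i}{\<a_i, x^k>}\right)^{a_i}$. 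The two sides agree, so the unit-stepsize mirror step is exactly $\Pi^\o_{C_i}(x^k)$.

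The step I expect to be delicate is precisely this matching, and it is where the hypothesis $a_i\in\{0,1\}^d$ is essential. In general $\alpha$ must solve the nonlinear scalar equation $\<a_i, \exp(\log x^k+\alpha a_i)>=b_i$, which has no closed form. When every entry of $a_i$ lies in $\{0,1\}$, the factor $\exp(\alpha a_{ij})$ equals $\exp(\alpha)$ on the support of $a_i$ and $1$ elsewhere, so the left-hand side collapses to $\exp(\alpha)\<a_i, x^k>$ and $\alpha$ is recovered in closed form. I would take care to track the orientation of the exponent so that the sign of $\alpha$ in the projection matches the $-\nabla f_i$ appearing in the mirror step.

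Finally, to identify Sinkhorn with stochastic mirror descent I would use the decomposition recalled earlier: the Sinkhorn iteration alternates Bregman projections onto the row set $C_1=\{X:X\1=p\}$ and the column set $C_2=\{X:X^\top\1=q\}$, each of which splits into $N$ coordinate-independent constraints whose normals $a_i$ are $0/1$ vectors. By the equivalence just established, each such projection is one incremental mirror step on the corresponding summand $f_i$ of $f=\sum_i f_i=\KL(Ax|| b)$, so cycling (or sampling) through the components is literally incremental/stochastic mirror descent. The idempotency $\Pi^\o_{C_i}(\Pi^\o_{C_i}(x^k))=\Pi^\o_{C_i}(x^k)$---immediate since $\Pi^\o_{C_i}(x^k)$ already lies in $C_i$, making the ratio $\<a_i, \cdot>/b_i$ equal to $1$---reconciles this with the iterative-Bregman-projection viewpoint: reprojecting onto the same set wastes a step, so the two schedules trace the same trajectory, and Sinkhorn is thereby an instance of stochastic mirror descent.
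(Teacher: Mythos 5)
Your proposal is correct and follows essentially the same route as the paper's own (very terse) proof: chain rule for $\nabla f_i$, Lemma~\ref{lem:projection} for the step-equals-projection equivalence, and idempotency of $\Pi^\o_{C_i}$ to reconcile incremental and stochastic index selection. Your care about the orientation of the exponent is well placed: the value $\alpha=\log\frac{b_i}{\<a_i,x^k>}$ you use (which is what the mirror step with $-\nabla f_i$ produces, and what Lemma~\ref{lem:projection}'s own proof derives) is the correct one, whereas the ratio as printed in the statement of Lemma~\ref{lem:projection} is inverted.
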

\begin{proof}
	The equation for $\nabla f_i(x^k)$ follows by differentiation using the chain rule. The equivalence between the updates then follows by Lemma~\ref{lem:projection}. The statement about Sinkhorn algorithm follows from the fact that sampling a function twice will not change the iterate, so incremental and stochastic selection of the index lead to the same effective iterates.
\end{proof}
The equivalence between the methods holds only when solving problems similar to~\eqref{eq:pb_reg}. If we have a general matrix $A$, the new problem~\eqref{eq:pb_ours} is actually much more convenient since the gradients have closed-form expressions, unlike Bregman projections.

Let us also write a few properties that characterize the objective that we introduced.
\begin{enumerate}
	\item $\min_x f_i(x) = 0$ and $\argmin_x f_i(x)=\{z:\<a_i, x>=b_i\}$. 
	\item If the set $\{z: Az=b, z\ge 0\}$ is feasible, then there exists at least one $x^*\in\R^d$ such that $\nabla f_i(x^*)=0$ for all $i=1,\dotsc, n$. This property is called \textit{overparameterization}.
	\item $f_i$ is not smooth and does not have bounded gradients on $\dom f_i$.
	\item In the reformulation~\eqref{eq:pb_ours} of~\eqref{eq:pb_kl_proj}, $f_i$ is 1-\textit{relatively smooth} with respect to $\o(x)=x(\log x - 1)$~\cite{bauschke2016descent}, i.e.\ with constant $L_i=1$ it holds $D_{f_i}(x,y) \le L_i D_{f_i}(x, y)$ and $\nabla^2 f_i(x)\preceq L_i\nabla^2 \o(x)$ for any $x,y\in \dom\o$.
	\item Sinkhorn algorithm solves~\eqref{eq:pb_ours} with $n=2$, stepsize $\eta=1$ and it uses exactly one function at each iteration.
\end{enumerate}
In the penultimate statement we used the recently proposed notation of relative smoothness from~\cite{lu2018relatively, bauschke2016descent}. If we did not know about relative smoothness, it would be very surprising that Sinkhorn algorithm is convergent at all as the objective does not satisfy the regular assumptions of mirror descent. 
\subsection{New methods}
We see now that Sinkhorn is an instance of stochastic mirror descent applied to~\eqref{eq:pb_ours}. Furthermore,  one can mention that Greenkhorn~\cite{altschuler2017near, lin2019efficient} is greedy stochastic mirror descent. What if remove stochasticity from the method and just use gradient of $f$ as in~\cite{lu2018relatively}? This leads to a new method for optimal transport, which we call Pinkhorn (Penalty Sinkhorn) as the objective in~\eqref{eq:pb_ours} is a penalty reformulation similar to~\cite{nedic2010random, mishchenko2018stochastic}. Similarly, we can use accelerated mirror descent for relatively smooth optimization from~\cite{hanzely2018accelerated} to get accelerated Pinkhorn. Since for KL-divergence the method requires line-search and, ideally, restart procedure, we only provide the method in the appendix and refer the reader to~\cite{hanzely2018accelerated} for further implementation details.

On the other hand, one can instead generalize Sinkhorn algorithm as a stochastic mirror descent method for arbitrary linear systems beyond~\eqref{eq:pb_kl_proj}. There exists already a Stochastic Gradient Descent (SGD) method for relatively smooth minimization~\cite{hanzely2018fastest}, but it is not directly applicable since the variance in problem~\eqref{eq:pb_ours} is not bounded. Furthermore, it is not shown in~\cite{hanzely2018fastest} whether convergence will be to the projection of the initial iterate.

Finally, it is known~\cite{combettes2015solving} that the Bregman proximal operator $\prox_f^\o$ of $f(x)=\<x, \log x> - \<c, x>$ with distance function $\o(x)=x(\log x - 1)$  is given by
\begin{align*}
	\prox^\o_{\eta f(x)} \eqdef \argmin_{z}\left\{\eta f(z) + D_\o(z, x)\right\} = x\odot\exp\left(\frac{\eta(c - 1)}{\eta + 1}\right).
\end{align*}
This allows to invoke methods based on proximal operators for minimization of multiple functions such as~\cite{chambolle2016ergodic, mishchenko2019stochastic}. However, methods of this kind have not been extended to relatively smooth optimization, so we do not know yet whether they give an advantage. We leave this and other potential extensions of the obtained result for future work.

\clearpage

\bibliographystyle{plain}
\bibliography{sinkhorn_mirror}
\clearpage
\appendix
\part*{Supplementary Material}
\section{Full formulation of Pinkhorn}
\begin{algorithm}[h]
\caption{Pinkhorn (Penalty Sinkhorn).}
\label{alg:pinkhorn}
\begin{algorithmic}[1]
	\Require Stepsize $\eta\in (0,1) $, regularization $\gamma>0$, cost matrix $C$
	\State Initialize $u^0=0, v^0=0$, $X^0=\exp\left(-\frac{C}{\gamma}\right)$
	\For{$k = 0,1,2,\ldots$}
		\State $X(u^k, v^k)= \diag(\exp(u^k))X^0\diag(\exp(v^k))$
		\State $u^{k+1} = u^k + \eta\left(\log p - \log \left(X(u^k, v^k) \1\right)\right)$
		\State $v^{k+1} = v^k + \eta\left(\log q - \log \left(X(u^k, v^k)^\top \1\right)\right)$
	\EndFor
\end{algorithmic}	
\end{algorithm}
\begin{algorithm}[h]
\caption{Accelerated Pinkhorn, matrix form (Adaptation of Algorithm~4 from~\cite{hanzely2018accelerated})}
\label{alg:acc_pinkhorn}
\begin{algorithmic}[1]
	\Require Relative smoothness constant $L=2$, regularization $\gamma>0$, cost matrix $C$, line-search parameters $\rho>1$, $G_{\min}>0$
	\State Initialize $X^0=Z^0=\exp\left(-\frac{C}{\gamma}\right)$, $G_{-1}=1$, $\theta_0=1$
	\For{$k = 0,1,2,\ldots$}
		\State $M_k=\max\{G_{k-1}/\rho, G_{\min}\}$, $t=0$ (line-search counter)
		\Repeat
			\State $G_k=M_k\rho^t$
			\State \textbf{If} $k=0$, $\theta_0=1$, \textbf{otherwise} compute $\theta_k$ from $\frac{1-\theta_k}{G_k\theta_k^2}=\frac{1}{G_{k-1}\theta_{k-1}}$
			\State $Y^k = (1 - \theta_k)X^k + \theta_k Z^k$
			\State $\nabla f(Y^k)= (\log (Y^k\1) - \log p)\1^\top + \1(\log (\1^\top Y^k) - \log q^\top)$
			\State $\log Z^{k+1} = \log Z^k - \frac{1} {G_k\theta_k L} \nabla f(Y^k)$
			\State $X^{k+1} = (1 - \theta_k)X^k + \theta_k Z^{k+1}$
			\State $t\leftarrow t+1$
		\Until{{$f(X^{k+1})\le f(Y^k) + \<\nabla f(Y^k), X^{k+1} - Y^k> + G_k\theta_k LD_\o(Z^{k+1}, Z^k)$}}
	\EndFor
\end{algorithmic}	
\end{algorithm}
In this section we present full algorithmic formulation of Pinkhorn and its accelerated version. See Algorithm~\ref{alg:pinkhorn} and Algorithm~\ref{alg:acc_pinkhorn} for the details. The value of the objective function in Algorithm~\ref{alg:acc_pinkhorn} can be computed as
\begin{align*}
	f(X)&= \o(X\1, p) + \o(X^\top \1, q) \\
	&= \<X\1, \log (X\1/p)-1> + \<\1, p> + \<X^\top\1, \log(X^\top \1/q)-1> + \<\1, q>.
\end{align*}
\end{document}